\providecommand{\tabularnewline}{\\}
\theoremstyle{plain}
\newtheorem{thm}{\protect\theoremname}
\theoremstyle{definition}
\newtheorem{example}[thm]{\protect\examplename}
\theoremstyle{plain}
\newtheorem{prop}[thm]{\protect\propositionname}
\journal{TBD}
\providecommand{\examplename}{Example}
\providecommand{\propositionname}{Proposition}
\providecommand{\theoremname}{Theorem}
\begin{document}

\begin{frontmatter}{}

\title{Almost optimal manipulation of a pair\textbf{ }of alternatives}

\author[wms]{Jacek Szybowski}

\ead{jacek.szybowski@agh.edu.pl}

\author[dap]{Konrad Ku\l akowski\corref{cor1}}

\ead{konrad.kulakowski@agh.edu.pl}

\author[dap]{Sebastian Ernst}

\ead{sebastian.ernst@agh.edu.pl}

\cortext[cor1]{Corresponding author}

\address[wms]{AGH University of Science and Technology, The Faculty of Applied
Mathematics, al. Mickiewicza 30, 30-059 Krakow, Poland}

\address[dap]{AGH University of Science and Technology, The Department of Applied
Computer Science, Al. Adama Mickiewicza 30, 30-059 Krakow, Poland}
\begin{abstract}
The role of an expert in the decision-making process is crucial, as
the final recommendation depends on his disposition, clarity of mind,
experience, and knowledge of the problem. However, the recommendation
also depends on their honesty. But what if the expert is dishonest?
Then, the answer on how difficult it is to manipulate in a given case
becomes essential. In the presented work, we consider manipulation
of a ranking obtained by comparing alternatives in pairs. More specifically,
we propose an algorithm for finding an almost optimal way to swap
the positions of two selected alternatives. Thanks to this, it is
possible to determine how difficult such manipulation is in a given
case. Theoretical considerations are illustrated by a practical example.
\end{abstract}
\begin{keyword}
pairwise comparisons \sep data manipulation \sep rank reversal \sep
orthogonal projections. 
\end{keyword}

\end{frontmatter}{}

\section{Introduction\label{sec:Introduction}}

The popularity of the pairwise comparison methods in the field of
multi-criteria decision analysis is largely due to their simplicity.
It is easier for a decision maker to compare two objects at the same
time, as opposed to comparing larger groups. Although the first systematic
use of pairwise comparison is attributed to Ramon Llull \citep{Colomer2011rlfa},
thirteenth-century alchemist and mathematician, it can be assumed
that also prehistoric people used this method in practice. In the
beginning, people were interested in qualitative comparisons. Over
time, however, this method gained a quantitative character. The twentieth-century
precursor of the quantitative use of pairwise comparison was Thurstone,
who harnessed this method to compare social values \citep{Thurstone1927tmop}.
Continuation of studies on the pairwise comparison method \citep{Thurstone1957trof,David1969tmop,Miller1966taow}
resulted the seminal work written by Saaty \citep{Saaty1977asmf}.
In this article Saaty proposed the Analytic Hierarchy Process (AHP)
\textendash{} a new multiple-criteria decision-making method. Thanks
to the popularity of AHP, the pairwise comparison method has become
one of the most frequently used decision-making techniques. Its numerous
variants and extensions have found application in economy \citep{Peterson1998evabt},
consumer research \citep{Gacula1984smif}, management \citep{Lepetu2012tuoa,Pecchia2013imcv,Urbaniec2022fseb},
construction \citep{Darko2019roao}, military science \citep{Hyde2012maot},
education and science \citep{Liberatore1997gdmi,Koczkodaj2014otqe},
chemical engineering \citep{Cui2020seoc}, oil industry \citep{Hu2020ofco}
and others. The method is constantly developing and inspires researchers
who conduct work on the inconsistency of the paired comparisons \citep{Bortot2023anpo,Bozoki2011albi,Brunelli2016saso,Koczkodaj2018aoii,Kulakowski2014tntb},
incompleteness of decision data \citep{Faramondi2019iahp,Csato2016ipcm,Kulakowski2020iifi,Kulakowski2019tqoi},
priority calculations \citep{Mazurek2022otdo,Bozoki2010ooco,kulakowski2020otgm,Kulakowski2021otsb,Janicki1996awoa},
representation of uncertain knowledge \citep{Yuan2023lsgd,Bartl2022anaf,PortodeLima2022nhaq}
as well as new methods based on the pairwise comparisons principle
\citep{PortodeLima2022nhaq,Rezaei2015bwmc,Koczkodaj2015pcs,Kulakowski2015hreg}.

Popularity of the decision-making methods also makes them vulnerable
to attacks and manipulations. This problem has been studied by several
researchers including Yager \citep{Yager2001pspm,Yager2002dasm} who
considered strategic preferential manipulations, Dong et al. \citep{Dong2021cras},
who addressed manipulation in the group decision-making or Sasaki
\citep{Sasaki2023smig}, on strategic manipulation in the context
of in group decisions with pairwise comparisons. Some aspects of decision
manipulation in the context of electoral systems are presented in
\citep{Gibbard1973movs,Gardenfors1976mosc,Taylor2005scat}.

In the presented work, we want to take a step towards determining
the degree of difficulty of manipulating in the pairwise comparison
method. For this purpose, we will propose an algorithm for calculating
the closest approximation of the pairwise comparison matrix (PCM),
which equates the priorities of two selected alternatives. We apply
a similar technique of orthogonal projections to that used in \citep{Koczkodaj1997aobf,Koczkodaj2020oopo}.
The difference between the initial matrix and the modified matrix
shows the degree of difficulty of a given manipulation. Although the
reasoning is done for additive matrices, the obtained result is also
valid for multiplicative matrices.

The article consists of four sections. \emph{Introduction} (Sec. \ref{sec:Introduction})
and \emph{Preliminaries} (Sec. \ref{sec:Preliminaries}) present the
state of research and introduce basic concepts and definitions in
the field of the pairwise comparison method. The third section, \emph{Towards
optimal manipulation of a pair of alternatives}, defines the procedure
to construct a manipulated pairwise comparisons matrix. It also contains
a method for determining the difficulty of manipulation. The article
ends with \emph{Conclusion} (Sec. \ref{sec:Conclusions}), summarizing
the achieved results.

\section{Preliminaries\label{sec:Preliminaries}}

\subsection{Multiplicative pairwise comparisons matrices}

Let us assume that we want perform pairwise comparison of a finite
set $E=\{e_{1},\ldots,e_{n}\}$ of alternatives. The comparisons can
be expressed in a \textit{pairwise comparisons matrix} (PCM) $M=[m_{ij}]$
with positive elements satisfying the \textit{reciprocity condition}
\begin{equation}
m_{ij}\cdot m_{ji}=1,
\end{equation}
for $i,j\in\{1,\ldots,n\}$.

Given a single PCM $M$, one of the more popular procedures to assign
a positive weight $w_{k}$ to each alternative $e_{k}$ ($k\in\{1,\ldots,n\}$),
showing its position in a ranking, is the Geometric Mean Method (GMM),
introduced in \citep{Crawford1985anot}. Then, the formula for $w_{k}$
can be calculated as the geometric mean of the $k$-th row elements:
\begin{equation}
w_{k}=\sqrt[n]{\prod_{j=1}^{n}m_{kj}}.\label{GM}
\end{equation}

If we want to standardize the resulting weight vector, we divide each
coordinate by the sum of all of them:

\[
\hat{w}_{k}=\frac{w_{k}}{\sum_{j=1}^{n}w_{j}}.
\]

Let us denote the set of all PCMs by $\mathcal{M}$.

\subsection{Additive pairwise comparisons matrices}

The family $\mathcal{M}$ is not a linear space. However, we can easily
transform every multiplicative PCM $M$ into an additive one using
the following map: 
\[
\varphi:\ \mathcal{M}\ni[m_{ij}]\mapsto[\ln(m_{ij})]\in\mathcal{A},
\]
where 
\[
\mathcal{A}:=\{[a_{ij}]:\ \forall i,j\in\{1,\ldots,n\}\ a_{ij}\in\mathbb{R}\textnormal{ and }a_{ij}+a_{ji}=0\},
\]
is a linear space of additive PCMs.

Obviously, we can define the map 
\[
\mu:\ \mathcal{A}\ni[a_{ij}]\mapsto[e^{a_{ij}}]\in\mathcal{M},
\]
such that 
\[
\mu\circ\varphi=id_{\mathcal{M}}
\]
and 
\[
\varphi\circ\mu=id_{\mathcal{A}}.
\]
Since $\varphi$ and $\mu$ are mutually reverse, from now on we will
consider only the additive case in order to use the algebraic structure
of $\mathcal{A}$.

If we treat an additive PCM $A$ as the image of $M\in\mathcal{M}$
by the map $\varphi$, we can also obtain the vector of weights $v$
by use of the logarithmic mapping: 
\[
v_{k}=\ln(w_{k}).
\]
By applying (\ref{GM}) we get 
\[
v_{k}=\ln\left(\sqrt[n]{\prod_{j=1}^{n}m_{kj}}\right)=\frac{\ln\left(\prod_{j=1}^{n}m_{kj}\right)}{n}=\frac{\sum_{j=1}^{n}\ln(m_{kj})}{n}=\frac{\sum_{j=1}^{n}a_{kj}}{n},
\]
so the $k$-th coordinate of $A$ can be calculated as the arithmetic
mean of the $k$-th row of $A$.

\section{Towards optimal manipulation of a pair of alternatives}

Let us start with a very simple example.
\begin{example}
\label{ex1} Consider a family of additive pairwise comparisons matrices
\[
A_{\varepsilon}=\left[\begin{array}{ccc}
0 & 1+\varepsilon & -1\\
-1-\varepsilon & 0 & 1\\
1 & -1 & 0
\end{array}\right].
\]
If we take $\varepsilon=\frac{1}{n}$ and $\varepsilon=-\frac{1}{n}$
($n\in\mathbb{N}$), we obtain two PCMs, whose weight vectors are
\[
v_{\frac{1}{n}}=\left(\frac{1}{3n},-\frac{1}{3n},0\right)^{T}
\]
and 
\[
v_{-\frac{1}{n}}=\left(-\frac{1}{3n},\frac{1}{3n},0\right)^{T},
\]
respectively.

It implies that the order of alternatives is $(a_{1},a_{3},a_{2})$
in the first case and $(a_{2},a_{3},a_{1})$ in the second case.

Since the standard Frobenious distance of the matrices is 
\[
||A_{\frac{1}{n}}-A_{-\frac{1}{n}}||=\sqrt{\left(\frac{2}{n}\right)^{2}+\left(\frac{2}{n}\right)^{2}}=\frac{2\sqrt{2}}{n},
\]
they can be arbitrarily close. 
\end{example}

Example \ref{ex1} shows that it is impossible to find the PCM closest
to a given one such that the ranking positions of two given alternatives
induced by both matrices will be reversed.

However, it is possible to find a matrix closest to a given one such
that their positions are the same.

\subsection{The tie spaces}

Fix $i,j\in\{1,\ldots,n\}$.

Let us define the subspace ${\cal A}_{ij}$ of all additive PCMs which
induce the ranking such that alternatives $i$ and $j$ are equal:

\[
{\cal A}_{ij}=\left\{ A\in{\cal A}:\frac{1}{n}\sum_{k=1}^{n}a_{ik}=\frac{1}{n}\sum_{k=1}^{n}a_{jk}\right\} .
\]

We will call such a linear space \textit{a tie space}.
\begin{prop}
\label{dim_aij} $\dim{\cal A}_{ij}=\frac{n^{2}-n}{2}-1$. 
\end{prop}

\begin{proof}
Each reciprocal additive matrix is uniquely defined by $\frac{n^{2}-n}{2}$
independent numbers $a_{qr},\ 1\leq q<r\leq n$ above the main diagonal.
The equation 
\begin{equation}
\sum_{p=1}^{n}a_{ip}=\sum_{p=1}^{n}a_{jp}\label{eq1}
\end{equation}
is equivalent to 
\[
\sum_{p=1}^{i-1}a_{ip}+0+\sum_{p=i+1}^{n}a_{ip}=\sum_{p=1}^{j-1}a_{jp}+0+\sum_{p=j+1}^{n}a_{jp}
\]
and to 
\[
-\sum_{p=1}^{i-1}a_{pi}+\sum_{p=i+1}^{n}a_{ip}=-\sum_{p=1}^{j-1}a_{pj}+\sum_{p=j+1}^{n}a_{jp}.
\]
Finally, (\ref{eq1}) is equivalent to 
\begin{equation}
a_{jn}=\sum_{p=1}^{j-1}a_{pj}-\sum_{p=1}^{i-1}a_{pi}+\sum_{p=i+1}^{n}a_{ip}-\sum_{p=j+1}^{n-1}a_{jp}.\label{eq2}
\end{equation}
This way we express $a_{jn}$ by the rest of the elements above the
main diagonal in the $i$-th and $j$-th row and column, which decreases
the dimension of the space by one.
\end{proof}
\noindent Now let us define a basis for the tie space ${\cal A}_{ij}$.
With no loss of generality we can assume that $i<j<n$. Let

\[
Z_{ij}:=\{(q,r):\ 1\leq q<r\leq n,\ \{q,r\}\cap\{i,j\}=\emptyset\}.
\]

\begin{prop}
The set $Z_{ij}$ has $\frac{(n-2)(n-3)}{2}$ elements. 
\end{prop}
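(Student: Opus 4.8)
The plan is to count the pairs $(q,r)$ with $1\le q<r\le n$ such that neither $q$ nor $r$ belongs to $\{i,j\}$. The condition $\{q,r\}\cap\{i,j\}=\emptyset$ means both coordinates are drawn from the $(n-2)$-element set $\{1,\ldots,n\}\setminus\{i,j\}$, and the requirement $q<r$ means we are choosing an unordered pair of distinct indices from that set.

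First I would observe that the total number of pairs $(q,r)$ with $1\le q<r\le n$ and no restriction is $\binom{n}{2}=\frac{n(n-1)}{2}=\frac{n^2-n}{2}$, which matches the count of above-diagonal entries used in Proposition~\ref{dim_aij}. The set $Z_{ij}$ simply removes those pairs that touch the index $i$ or the index $j$. So the cleanest route is to count directly: the number of ways to choose an unordered pair from a set of $n-2$ elements is $\binom{n-2}{2}=\frac{(n-2)(n-3)}{2}$, which is exactly the claimed cardinality.

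Alternatively, and perhaps more in keeping with an inclusion--exclusion flavor, I would subtract from $\binom{n}{2}$ the pairs that meet $\{i,j\}$. The pairs containing $i$ number $n-1$, likewise those containing $j$ number $n-1$, and the single pair $\{i,j\}$ is counted in both, so by inclusion--exclusion the number of pairs meeting $\{i,j\}$ is $(n-1)+(n-1)-1=2n-3$. Subtracting gives $\frac{n(n-1)}{2}-(2n-3)=\frac{n^2-n-4n+6}{2}=\frac{n^2-5n+6}{2}=\frac{(n-2)(n-3)}{2}$, in agreement with the direct count.

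This statement is essentially a routine combinatorial identity, so I do not anticipate a genuine obstacle; the only thing to be careful about is making sure the indices $i$ and $j$ are genuinely distinct (guaranteed by $i<j$) so that removing them deletes two distinct indices rather than one. Given that care, either the direct $\binom{n-2}{2}$ argument or the inclusion--exclusion bookkeeping yields $\frac{(n-2)(n-3)}{2}$ immediately, and I would present the direct counting argument as the main proof since it is the shortest.
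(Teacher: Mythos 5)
Your proposal is correct. Your primary argument differs from the paper's: you count $Z_{ij}$ directly as the set of $2$-element subsets of $\{1,\ldots,n\}\setminus\{i,j\}$, giving $\binom{n-2}{2}=\frac{(n-2)(n-3)}{2}$ in one step, whereas the paper works subtractively, starting from the $\frac{n^{2}-n}{2}$ above-diagonal entries and removing, via a row/column partition of the matrix ($i-1$ entries in the $i$-th column, $j-1$ in the $j$-th column, $n-i-1$ in the $i$-th row excluding $a_{ij}$, and $n-j$ in the $j$-th row), a total of $2n-3$ entries. Your secondary inclusion--exclusion computation is essentially the paper's argument in disguise: the paper avoids the double count of the pair $(i,j)$ by explicitly excluding $a_{ij}$ from the $i$-th row tally, while you handle it by the $(n-1)+(n-1)-1$ correction; both yield the same $2n-3$. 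The direct $\binom{n-2}{2}$ count is shorter and less error-prone, while the paper's row/column bookkeeping keeps the identification of which above-diagonal matrix entries remain ``free'' explicit, which is the picture the authors reuse when constructing the basis matrices $C^{qr}$, $D^{p}$, $E^{p}$, $F^{p}$, $G^{p}$ immediately afterwards. Your care about $i\neq j$ is well placed and is guaranteed by the standing assumption $i<j$.
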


\begin{proof}
The number of all elements above the main diagonal is $\frac{n^{2}-n}{2}$.
There are: 
\begin{itemize}
\item $i-1$ elements in the $i$-th column, 
\item $j-1$ elements in the $j$-th column, 
\item $n-i-1$ elements (excluding $a_{ij}$) in the $i$-th row, 
\item $n-j$ elements in the $j$-th row. 
\end{itemize}
Thus, the total number of $Z_{ij}$ elements equals 
\begin{eqnarray*}
\overline{\overline{Z_{ij}}} & = & \frac{n^{2}-n}{2}-(i-1)-(j-1)-(n-i-1)-(n-j)=\\
 & = & \frac{n^{2}-n}{2}-2n+3=\frac{n^{2}-n-4n+6}{2}=\frac{(n-2)(n-3)}{2}.
\end{eqnarray*}
\end{proof}
At first, for each $(q,r)\in Z_{ij}$ let us define $C^{qr}\in{\cal A}$,
whose elements are given by 
\[
c_{kl}^{qr}=\left\{ \begin{array}{rl}
1, & k=q,\ l=r\\
-1, & k=r,\ l=q\\
0, & \textnormal{otherwise}
\end{array}\right..
\]

Next, we define the elements of additive PCMs $D^{p}$ for $p\in\{1,\ldots,i-1\}$,
$E^{p}$ for $p\in\{1,\ldots,j-1\}$, $F^{p}$ for $p\in\{i+1,\ldots,j-1,j+1,\ldots,n\}$
and $G^{p}$ for $p\in\{j+1,\ldots,n-1\}$ by formulas:

\[
d_{kl}^{p}=\left\{ \begin{array}{rl}
1, & (k=p,\ l=i)\textnormal{ or }(k=n,\ l=j)\\
-1, & (k=i,\ l=p)\textnormal{ or }(k=j,\ l=n)\\
0, & \textnormal{otherwise}
\end{array}\right.,
\]

\[
e_{kl}^{p}=\left\{ \begin{array}{rl}
1, & (k=p,\ l=j)\textnormal{ or }(k=j,\ l=n,\ p\neq i)\\
-1, & (k=j,\ l=p)\textnormal{ or }(k=n,\ l=j,\ p\neq i)\\
2, & k=j,\ l=n,\ p=i\\
-2, & k=n,\ l=j,\ p=i\\
0, & \textnormal{otherwise}
\end{array}\right.,
\]

\[
f_{kl}^{p}=\left\{ \begin{array}{rl}
1, & (k=i,\ l=p)\textnormal{ or }(k=j,\ l=n)\\
-1, & (k=p,\ l=i)\textnormal{ or }(k=n,\ l=j)\\
0, & \textnormal{otherwise}
\end{array}\right.,
\]

and

\[
g_{kl}^{p}=\left\{ \begin{array}{rl}
1, & (k=j,\ l=p)\textnormal{ or }(k=n,\ l=j)\\
-1, & (k=p,\ l=j)\textnormal{ or }(k=j,\ l=n)\\
0, & \textnormal{otherwise}
\end{array}\right..
\]

\begin{prop}
\label{base_card} The total number of matrices $C^{q,r}$, $D^{p}$,
$E^{p}$, $F^{p}$ and $G^{p}$ is $\frac{n^{2}-n}{2}-1$. 
\end{prop}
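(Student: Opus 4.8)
The plan is to count the five families separately and sum their cardinalities, checking that the total collapses to $\frac{n^{2}-n}{2}-1$. The family $C^{qr}$ is indexed by $Z_{ij}$, so by the preceding proposition it contributes exactly $\frac{(n-2)(n-3)}{2}$ matrices. The remaining four families are indexed by explicit integer ranges, so the counting reduces to reading off the endpoints.

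First I would count $D^{p}$, indexed by $p\in\{1,\ldots,i-1\}$, giving $i-1$ matrices, and $E^{p}$, indexed by $p\in\{1,\ldots,j-1\}$, giving $j-1$ matrices. Next I would count $F^{p}$, whose index set $\{i+1,\ldots,j-1,j+1,\ldots,n\}$ is $\{i+1,\ldots,n\}$ with the single value $j$ removed; since $\{i+1,\ldots,n\}$ has $n-i$ elements, this yields $n-i-1$ matrices. Finally $G^{p}$ is indexed by $p\in\{j+1,\ldots,n-1\}$, giving $n-j-1$ matrices. The one point demanding care is the bookkeeping of these ranges -- in particular that $F^{p}$ omits $p=j$ and that $G^{p}$ stops at $n-1$ rather than at $n$ -- since an off-by-one slip here is exactly what would corrupt the final count.

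With the five counts in hand I would add them. The four linear contributions simplify nicely, the dependence on $i$ and $j$ cancelling: $(i-1)+(j-1)+(n-i-1)+(n-j-1)=2n-4$. Adding the quadratic term then gives $\frac{(n-2)(n-3)}{2}+2n-4=\frac{n^{2}-5n+6+4n-8}{2}=\frac{n^{2}-n-2}{2}=\frac{n^{2}-n}{2}-1$, which is the claimed total. I expect no genuine obstacle beyond the index-range bookkeeping; the final algebra is a one-line simplification.
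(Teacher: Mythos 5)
Your proposal is correct and follows exactly the same route as the paper: count each of the five families ($\frac{(n-2)(n-3)}{2}$ for the $C^{qr}$, then $i-1$, $j-1$, $n-i-1$, and $n-j-1$ for the others), observe that the linear terms sum to $2n-4$, and simplify $\frac{(n-2)(n-3)}{2}+2n-4$ to $\frac{n^{2}-n}{2}-1$. Your explicit attention to the index-range bookkeeping (the omission of $p=j$ in the $F^{p}$ family and the upper limit $n-1$ for $G^{p}$) is, if anything, slightly more careful than the paper's one-line computation.
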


\begin{proof}
Summing up the numbers of the consecutive matrices, we get the number
$\frac{(n-2)(n-3)}{2}+(i-1)+(j-1)+(n-i-1)+(n-j-1)==\frac{(n-2)(n-3)}{2}+2n-4=\frac{n^{2}-5n+6+4n-8}{2}=\frac{n^{2}-n}{2}-1.$
\end{proof}
\begin{thm}
A family of matrices 
\[
{\cal B}:=\{C^{qr}\}_{(q,r)\in Z_{ij}}\cup\{D^{p}\}_{p=1}^{i-1}\cup\{E^{p}\}_{p=1}^{j-1}\cup\{F^{p}\}_{p=i-1}^{j-1}\cup\{F^{p}\}_{p=j+1}^{n}\cup\{G^{p}\}_{p=j+1}^{n-1}
\]
is a basis of ${\cal A}_{ij}$.
\end{thm}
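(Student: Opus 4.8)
The plan is to lean on the two counting results already in hand. Proposition~\ref{dim_aij} gives $\dim\mathcal{A}_{ij}=\frac{n^{2}-n}{2}-1$, while Proposition~\ref{base_card} shows that $\mathcal{B}$ contains exactly $\frac{n^{2}-n}{2}-1$ matrices. Hence it suffices to prove two facts: that $\mathcal{B}\subseteq\mathcal{A}_{ij}$, and that $\mathcal{B}$ is linearly independent. Once both hold, the equality of cardinality and dimension forces $\mathcal{B}$ to span $\mathcal{A}_{ij}$ as well, so it is a basis; I would not attempt to prove spanning directly.

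Membership is a routine check of the tie condition $\frac{1}{n}\sum_{k}a_{ik}=\frac{1}{n}\sum_{k}a_{jk}$ for each of the five families. Computing the $i$-th and $j$-th row sums, one finds that for $C^{qr}$ with $(q,r)\in Z_{ij}$ both sums vanish (rows $i$ and $j$ are empty), for $D^{p}$ both equal $-1$, for $F^{p}$ both equal $1$, for $G^{p}$ both vanish, and for $E^{p}$ both vanish when $p\neq i$ while both equal $1$ when $p=i$. The coefficient $2$ at the entry $(j,n)$ in $E^{i}$ is exactly what makes the last case balance: the straddling entry $a_{ij}$ feeds into both row sums, so twice the usual correction at $(j,n)$ is needed.

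For linear independence I would express every generator in the standard skew-symmetric basis $\{C^{qr}\}_{1\le q<r\le n}$ of $\mathcal{A}$. This yields $D^{p}=C^{pi}-C^{jn}$, $F^{p}=C^{ip}+C^{jn}$, $G^{p}=C^{jp}-C^{jn}$, $E^{p}=C^{pj}+C^{jn}$ for $p\neq i$, and $E^{i}=C^{ij}+2C^{jn}$, while each $C^{qr}$ with $(q,r)\in Z_{ij}$ is already a standard basis vector. The crucial observation is that each generator carries a \emph{leading} above-diagonal position held by no other generator: $C^{qr}$ at $(q,r)$, $D^{p}$ at $(p,i)$, $F^{p}$ at $(i,p)$, $G^{p}$ at $(j,p)$, $E^{p}$ (for $p\neq i$) at $(p,j)$, and $E^{i}$ at $(i,j)$. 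These positions are pairwise distinct, and tracking the index ranges shows that together they exhaust every above-diagonal position except $(j,n)$. Given any vanishing linear combination, reading off its coordinate at each leading position in turn forces the matching coefficient to be zero, so the combination is trivial and $\mathcal{B}$ is independent.

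The membership computation is mechanical; the step demanding genuine care is verifying the leading-position bookkeeping --- that the six kinds of leading entries are pairwise distinct and collectively miss only $(j,n)$. This is where the construction's design becomes visible: $(j,n)$ is precisely the coordinate solved for in equation~(\ref{eq2}), and every generator in $\mathcal{B}$ shares this single ``coupling'' entry, which is therefore the one position that can never be leading. Confirming the distinctness across the boundary cases of the ranges for $p$, and treating $E^{i}$ separately, is the main obstacle; once the leading-position table is checked, independence and the basis property follow at once.
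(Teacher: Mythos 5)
Your proof is correct, but it takes the complementary route to the paper's. Both arguments start from the same counting facts (Propositions \ref{dim_aij} and \ref{base_card}), which reduce the theorem to one further property of $\mathcal{B}$; the paper chooses \emph{spanning}, while you choose \emph{membership plus linear independence} and get spanning for free from the dimension count. Concretely, the paper takes an arbitrary $A\in\mathcal{A}_{ij}$, forms the explicit combination $H=\sum_{(q,r)\in Z_{ij}}a_{qr}C^{qr}+\sum_{p}a_{pi}D^{p}+\sum_{p}a_{pj}E^{p}+\sum_{p}a_{ip}F^{p}+\sum_{p}a_{jp}G^{p}$, and verifies $H=A$ entry by entry; the only nontrivial entry is $(j,n)$, where the relation (\ref{eq2}) defining the tie space is exactly what closes the computation. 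Your two ingredients check out: the row-sum verifications for the five families are all correct (including the role of the coefficient $2$ in $E^{i}$, which compensates for the entry $a_{ij}$ feeding both row sums), and the independence argument is sound, since the expansions $D^{p}=C^{pi}-C^{jn}$, $F^{p}=C^{ip}+C^{jn}$, $G^{p}=C^{jp}-C^{jn}$, $E^{p}=C^{pj}+C^{jn}$ (for $p\neq i$), $E^{i}=C^{ij}+2C^{jn}$ show that every generator touches only its own leading position and the shared position $(j,n)$, the leading positions being pairwise distinct and exhausting all above-diagonal slots except $(j,n)$; evaluating a vanishing combination at each leading position then kills the corresponding coefficient. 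As for what each approach buys: the paper's argument is mildly constructive, producing the explicit coordinates of any $A\in\mathcal{A}_{ij}$ in the basis $\mathcal{B}$, whereas yours isolates more cleanly \emph{why} the construction works --- the generators are triangular with respect to the above-diagonal positions, with $(j,n)$ as the single dependent coordinate, exactly mirroring how (\ref{eq2}) eliminates $a_{jn}$ in the proof of Proposition \ref{dim_aij}. A further small merit of your version is that the inclusion $\mathcal{B}\subseteq\mathcal{A}_{ij}$ is checked explicitly, whereas in the paper it is only an implicit consequence of the spanning argument combined with the dimension count.
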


\begin{proof}
By Propositions \ref{dim_aij} and \ref{base_card}, the cardinality
of ${\cal B}$ is equal to the dimension of ${\cal A}_{ij}$, so it
is enough to show that each matrix $A\in{\cal A}_{ij}$ is generated
by matrices from ${\cal B}$.

For this purpose, let us define a matrix $H$ as a linear combination
of matrices from ${\cal B}$: 
\begin{eqnarray*}
H & := & \sum_{(q,r)\in Z_{ij}}a_{qr}C^{qr}+\sum_{p=1}^{i-1}a_{pi}D^{p}+\sum_{p=1}^{j-1}a_{pj}E^{p}+\sum_{p=i+1}^{j-1}a_{ip}F^{p}+\sum_{p=j+1}^{n}a_{ip}F^{p}+\\
 & + & \sum_{p=j+1}^{n-1}a_{jp}G^{p}.
\end{eqnarray*}
It is straightforward (as all but one addends of the sum are zeros)
that for $(q,r)\not\in\{(j,n),(n,j)\}$ we have 
\[
h_{qr}=a_{qr}.
\]

Likewise,

\begin{eqnarray*}
h_{jn} & = & \sum_{(q,r)\in Z_{ij}}a_{qr}\cdot0+\sum_{p=1}^{i-1}a_{pi}\cdot(-1)+\sum_{p=1}^{i-1}a_{pj}\cdot1+a_{ij}\cdot2+\sum_{p=i+1}^{j-1}a_{pj}\cdot1+\\
 & + & \sum_{p=i+1}^{j-1}a_{ip}\cdot1+\sum_{p=j+1}^{n}a_{ip}\cdot1+\sum_{p=j+1}^{n-1}a_{jp}\cdot(-1)=\\
 & = & \sum_{p=1}^{j-1}a_{pj}-\sum_{p=1}^{i-1}a_{pi}+\sum_{p=i+1}^{n}a_{ip}-\sum_{p=j+1}^{n-1}a_{jp}=a_{jn}.
\end{eqnarray*}

\noindent The last equality follows from (\ref{eq2}).

\noindent By analogy, $h_{nj}=a_{nj}$, so $A=H,$ which completes
the proof.
\end{proof}
\noindent Let us redefine ${\cal B}$ as a family of matrices 
\[
{\cal B}=\{B^{p}\}_{p=1}^{\frac{n^{2}-n}{2}-1}
\]
as follows:

We set $\{B^{p}\}_{p=1}^{\frac{(n-2)(n-3)}{2}}$ as matrices $\{C^{qr}\}_{(q,r)\in Z_{ij}}$,
ordered lexicografically, i.e. 
\begin{equation}
\begin{array}{l}
B^{1}:=C^{12},B^{2}:=C^{13},\ldots,B^{i-2}:=C^{1,i-1},B^{i-1}:=C^{1,i+1},\ldots,\\
B^{j-3}:=C^{1,j-1},B^{j-2}:=C^{1,j+1},\ldots,B^{n-3}:=C^{1,n},\ldots,B^{\frac{(n-2)(n-3)}{2}}:=C^{n-1,n}.
\end{array}\label{C}
\end{equation}

Next, we define 
\begin{eqnarray}
B^{\frac{(n-2)(n-3)}{2}+p}:=D^{p}, & p=1,\ldots,i-1,\label{D}\\
B^{\frac{(n-2)(n-3)}{2}+i-1+p}:=E^{p}, & p=1,\ldots,j-1,\label{E}\\
B^{\frac{(n-2)(n-3)}{2}+i+j-2+p}:=F^{p}, & p=i+1,\ldots,j-1,\label{F1}\\
B^{\frac{(n-2)(n-3)}{2}+j-3+p}:=F^{p}, & p=j+1,\ldots,n,\label{F2}\\
B^{\frac{(n-2)(n-3)}{2}+n-3+p}:=G^{p}, & p=j+1,\ldots,n-1.\label{G}
\end{eqnarray}

\begin{example}
\label{base} Consider $n=5,\ i=2$ and $j=3$.

\noindent Since $Z_{23}=\{(1,4),(1,5),(4,5)\}$, we get the following
basis of ${\cal A}_{23}$:\\

\noindent $B^{1}=C^{14}=\left(\begin{array}{ccccc}
0 & 0 & 0 & 1 & 0\\
0 & 0 & 0 & 0 & 0\\
0 & 0 & 0 & 0 & 0\\
-1 & 0 & 0 & 0 & 0\\
0 & 0 & 0 & 0 & 0
\end{array}\right),$ $B^{2}=C^{15}=\left(\begin{array}{ccccc}
0 & 0 & 0 & 0 & 1\\
0 & 0 & 0 & 0 & 0\\
0 & 0 & 0 & 0 & 0\\
0 & 0 & 0 & 0 & 0\\
-1 & 0 & 0 & 0 & 0
\end{array}\right),$

\noindent $B^{3}=C^{45}=\left(\begin{array}{ccccc}
0 & 0 & 0 & 0 & 0\\
0 & 0 & 0 & 0 & 0\\
0 & 0 & 0 & 0 & 0\\
0 & 0 & 0 & 0 & 1\\
0 & 0 & 0 & -1 & 0
\end{array}\right),$ $B^{4}=D^{1}=\left(\begin{array}{ccccc}
0 & 1 & 0 & 0 & 0\\
-1 & 0 & 0 & 0 & 0\\
0 & 0 & 0 & 0 & -1\\
0 & 0 & 0 & 0 & 0\\
0 & 0 & 1 & 0 & 0
\end{array}\right),$

\noindent $B^{5}=E^{1}=\left(\begin{array}{ccccc}
0 & 0 & 1 & 0 & 0\\
0 & 0 & 0 & 0 & 0\\
-1 & 0 & 0 & 0 & 1\\
0 & 0 & 0 & 0 & 0\\
0 & 0 & -1 & 0 & 0
\end{array}\right),$ $B^{6}=E^{2}=\left(\begin{array}{ccccc}
0 & 0 & 0 & 0 & 0\\
0 & 0 & 1 & 0 & 0\\
0 & -1 & 0 & 0 & 2\\
0 & 0 & 0 & 0 & 0\\
0 & 0 & -2 & 0 & 0
\end{array}\right),$

\noindent $B^{7}=F^{4}=\left(\begin{array}{ccccc}
0 & 0 & 0 & 0 & 0\\
0 & 0 & 0 & 1 & 0\\
0 & 0 & 0 & 0 & 1\\
0 & -1 & 0 & 0 & 0\\
0 & 0 & -1 & 0 & 0
\end{array}\right),$ $B^{8}=F^{5}=\left(\begin{array}{ccccc}
0 & 0 & 0 & 0 & 0\\
0 & 0 & 0 & 0 & 1\\
0 & 0 & 0 & 0 & 1\\
0 & 0 & 0 & 0 & 0\\
0 & -1 & -1 & 0 & 0
\end{array}\right),$

\noindent $B^{9}=G^{4}=\left(\begin{array}{ccccc}
0 & 0 & 0 & 0 & 0\\
0 & 0 & 0 & 0 & 0\\
0 & 0 & 0 & 1 & -1\\
0 & 0 & -1 & 0 & 0\\
0 & 0 & 1 & 0 & 0
\end{array}\right).$ 
\end{example}

\subsection{Orthogonalization}

It is a well-known fact of linear algebra that any basis of an inner
product space can be easily transformed into an orthogonal basis by
a standard Gram-Schmidt process.

In particular, if we apply that to the basis $B^{1},\ldots,B^{\frac{n^{2}-n}{2}-1}$
of the ${\cal A}_{ij}$ vector space equipped with a standard Frobenius
inner product $\langle\cdot,\cdot\rangle$ then we obtain a pairwise
orthogonal basis 
\[
H^{1},\ldots,H^{\frac{n^{2}-n}{2}-1}
\]
as follows: 
\begin{eqnarray*}
H^{1} & = & B^{1},\\
H^{2} & = & B^{2}-\frac{\langle H^{1},B^{2}\rangle}{\langle H^{1},H^{1}\rangle}H^{1},\\
H^{3} & = & B^{3}-\frac{\langle H^{1},B^{3}\rangle}{\langle H^{1},H^{1}\rangle}H^{1}-\frac{\langle H^{2},B^{3}\rangle}{\langle H^{2},H^{2}\rangle}H^{2},\\
\cdots & = & \cdots\\
H^{\frac{n^{2}-n}{2}-1} & = & B^{\frac{n^{2}-n}{2}-1}-\sum_{p=1}^{\frac{n^{2}-n}{2}-2}\frac{\langle H^{p},B^{\frac{n^{2}-n}{2}-1}\rangle}{\langle H^{p},H^{p}\rangle}H^{p}.
\end{eqnarray*}

\begin{example}
\label{ort-bas} Let us consider matrices $B^{1},\ldots,B^{9}$ from
Example \ref{base}. We will apply the Gram-Schmidt process to obtain
an orthogonal basis $H^{1},\ldots,H^{9}$ of ${\cal A}_{23}$:

\begin{eqnarray*}
H^{1}=B^{1},\\
\langle H^{1},B^{2}\rangle & = & 0\Rightarrow H^{2}=B^{2},\\
\langle H^{1},B^{3}\rangle & = & \langle H^{2},B^{3}\rangle=0\Rightarrow H^{3}=B^{3},\\
\langle H^{1},B^{4}\rangle & = & \langle H^{2},B^{4}\rangle=\langle H^{3},B^{4}\rangle=0\Rightarrow H^{4}=B^{4},\\
\langle H^{1},B^{5}\rangle & = & \langle H^{2},B^{5}\rangle=\langle H^{3},B^{5}\rangle=0,\ \langle H^{4},B^{5}\rangle=-2,\ \langle H^{4},H^{4}\rangle=4\Rightarrow\\
 & \Rightarrow & H^{5}=B^{5}+\frac{1}{2}H^{4}=\left(\begin{array}{ccccc}
0 & \frac{1}{2} & 1 & 0 & 0\\
-\frac{1}{2} & 0 & 0 & 0 & 0\\
-1 & 0 & 0 & 0 & \frac{1}{2}\\
0 & 0 & 0 & 0 & 0\\
0 & 0 & -\frac{1}{2} & 0 & 0
\end{array}\right),
\end{eqnarray*}

\begin{eqnarray*}
\langle H^{1},B^{6}\rangle & = & \langle H^{2},B^{6}\rangle=\langle H^{3},B^{6}\rangle=0,\ \langle H^{4},B^{6}\rangle=-4,\ \langle H^{5},B^{6}\rangle=2,\\
\langle H^{5},H^{5}\rangle & = & 3\Rightarrow H^{6}=B^{6}+H^{4}-\frac{2}{3}H^{5}=\left(\begin{array}{ccccc}
0 & \frac{2}{3} & -\frac{2}{3} & 0 & 0\\
-\frac{2}{3} & 0 & 1 & 0 & 0\\
\frac{2}{3} & -1 & 0 & 0 & \frac{2}{3}\\
0 & 0 & 0 & 0 & 0\\
0 & 0 & -\frac{2}{3} & 0 & 0
\end{array}\right),\\
\langle H^{1},B^{7}\rangle & = & \langle H^{2},B^{7}\rangle=\langle H^{3},B^{7}\rangle=0,\ \langle H^{4},B^{7}\rangle=-2,\ \langle H^{5},B^{7}\rangle=1,\\
\langle H^{6},B^{7}\rangle & = & \frac{4}{3},\ \langle H^{6},H^{6}\rangle=\frac{14}{3}\Rightarrow\\
 & \Rightarrow & H^{7}=B^{7}+\frac{1}{2}H^{4}-\frac{1}{3}H^{5}-\frac{2}{7}H^{6}=\left(\begin{array}{ccccc}
0 & \frac{1}{7} & -\frac{1}{7} & 0 & 0\\
-\frac{1}{7} & 0 & -\frac{2}{7} & 1 & 0\\
\frac{1}{7} & \frac{2}{7} & 0 & 0 & \frac{1}{7}\\
0 & -1 & 0 & 0 & 0\\
0 & 0 & -\frac{1}{7} & 0 & 0
\end{array}\right),
\end{eqnarray*}

\begin{eqnarray*}
\langle H^{1},B^{8}\rangle & = & \langle H^{2},B^{8}\rangle=\langle H^{3},B^{8}\rangle=0,\ \langle H^{4},B^{8}\rangle=-2,\ \langle H^{5},B^{8}\rangle=1,\\
\langle H^{6},B^{8}\rangle & = & \frac{4}{3},\ \langle H^{7},B^{8}\rangle=\frac{2}{7},\ \langle H^{7},H^{7}\rangle=\frac{16}{7}\Rightarrow\\
 & \Rightarrow & H^{8}=B^{8}+\frac{1}{2}H^{4}-\frac{1}{3}H^{5}-\frac{2}{7}H^{6}-\frac{1}{8}H^{7}=\\
 & = & \left(\begin{array}{ccccc}
0 & \frac{1}{8} & -\frac{1}{8} & 0 & 0\\
-\frac{1}{8} & 0 & -\frac{1}{4} & -\frac{1}{8} & 1\\
\frac{1}{8} & \frac{1}{4} & 0 & 0 & \frac{1}{8}\\
0 & \frac{1}{8} & 0 & 0 & 0\\
0 & -1 & -\frac{1}{8} & 0 & 0
\end{array}\right),
\end{eqnarray*}

\begin{eqnarray*}
\langle H^{1},B^{9}\rangle & = & \langle H^{2},B^{9}\rangle=\langle H^{3},B^{9}\rangle=0,\ \langle H^{4},B^{9}\rangle=2,\ \langle H^{5},B^{9}\rangle=-1,\\
\langle H^{6},B^{9}\rangle & = & -\frac{4}{3},\ \langle H^{7},B^{9}\rangle=-\frac{2}{7},\ \langle H^{8},B^{9}\rangle=-\frac{1}{4},\ \langle H^{8},H^{8}\rangle=\frac{9}{4}\Rightarrow\\
 & \Rightarrow & H^{9}=B^{9}-\frac{1}{2}H^{4}+\frac{1}{3}H^{5}+\frac{2}{7}H^{6}+\frac{1}{8}H^{7}+\frac{1}{9}H^{8}=\\
 & = & \left(\begin{array}{ccccc}
0 & -\frac{1}{9} & \frac{1}{9} & 0 & 0\\
\frac{1}{9} & 0 & \frac{2}{9} & \frac{1}{9} & \frac{1}{9}\\
-\frac{1}{9} & -\frac{2}{9} & 0 & 1 & -\frac{1}{9}\\
0 & -\frac{1}{9} & -1 & 0 & 0\\
0 & -\frac{1}{9} & \frac{1}{9} & 0 & 0
\end{array}\right).
\end{eqnarray*}
\end{example}

\subsection{The best approximation of a PCM equating two alternatives}

Consider an additive PCM $A$. In order to find its projection $A'$
onto the subspace ${\cal A}_{ij}$ we present $A'$ as a linear combination
of the orthogonal basis vectors 
\[
H^{1},\ldots,H^{\frac{n^{2}-n}{2}-1}.
\]
We will look for the factors 
\[
\varepsilon_{1},\ldots,\varepsilon_{\frac{n^{2}-n}{2}-1}
\]
such that $A'=\varepsilon_{1}H^{1}+\ldots\varepsilon_{\frac{n^{2}-n}{2}-1}H^{\frac{n^{2}-n}{2}-1}$.\\

Then, $\forall C\in{\cal A}_{ij},\ \langle A-A',C\rangle_{F}=0$,
which is equivalent to the system of linear equations:\\
 $\left\{ \begin{array}{l}
\langle A,H^{1}\rangle_{F}-\varepsilon_{1}\langle H^{1},H^{1}\rangle_{F}=0,\\
\langle A,H^{2}\rangle_{F}-\varepsilon_{2}\langle H^{2},H^{2}\rangle_{F}=0,\\
\cdots\\
\langle A,H^{\frac{n^{2}-n}{2}-1}\rangle_{F}-\varepsilon_{\frac{n^{2}-n}{2}-1}\langle H^{\frac{n^{2}-n}{2}-1},H^{\frac{n^{2}-n}{2}-1}\rangle_{F}=0,
\end{array}\right.$\\

Its solutions:

\[
\varepsilon_{k}=\frac{\langle A,H^{k}\rangle_{F}}{\langle H^{k},H^{k}\rangle_{F}},\ k=1,\ldots,\frac{n^{2}-n}{2}-1.
\]

Thus, the PCM $A'$ which generates a ranking equating the $i$-th
and $j$-th alternatives and which is the closest to $A$ can be calculated
from the formula

\begin{equation}
A'=\sum_{k=1}^{\frac{n^{2}-n}{2}-1}\frac{\langle A,H^{k}\rangle_{F}}{\langle H^{k},H^{k}\rangle_{F}}H^{k}.\label{Aprime}
\end{equation}

\begin{example}
\label{exa:example-8}Let us consider the PCM 
\begin{equation}
A=\left(\begin{array}{ccccc}
0 & -5 & 2 & 0 & 4\\
5 & 0 & 2 & 5 & -6\\
-2 & -2 & 0 & 4 & -9\\
0 & -5 & -4 & 0 & -8\\
-4 & 6 & 9 & 8 & 0
\end{array}\right).\label{eq:m-before-manipulation}
\end{equation}

The weights in a ranking vector obtained as the arithmetic means of
row elements of $A$ are 
\[
w=(0.2,1.2,-1.8,-3.4,3.8)^{T}.
\]

In order to find the PCM closest to $A$ which generates a ranking
equating the second and the third alternative, we take the orthogonal
basis $H^{1},\ldots,H^{9}$ described in Ex. \ref{ort-bas}. Next,
we calculate the coefficients in (\ref{Aprime}):

\medskip{}

{\footnotesize{}}%
\begin{tabular}{|c|c|c|c|c|c|c|c|c|c|}
\hline 
{\footnotesize{}$k$} & {\footnotesize{}$1$} & {\footnotesize{}$2$} & {\footnotesize{}$3$} & {\footnotesize{}$4$} & {\footnotesize{}$5$} & {\footnotesize{}$6$} & {\footnotesize{}$7$} & {\footnotesize{}$8$} & {\footnotesize{}$9$}\tabularnewline
\hline 
{\footnotesize{}$\langle A,H^{k}\rangle_{F}$} & {\footnotesize{}$0$} & {\footnotesize{}$8$} & {\footnotesize{}$-16$} & {\footnotesize{}$8$} & {\footnotesize{}$-10$} & {\footnotesize{}$-17.3333$} & {\footnotesize{}$4.285714$} & {\footnotesize{}$-18.25$} & {\footnotesize{}$12.22222$}\tabularnewline
\hline 
{\footnotesize{}$\langle H^{k},H^{k}\rangle_{F}$} & {\footnotesize{}$2$} & {\footnotesize{}$2$} & {\footnotesize{}$2$} & {\footnotesize{}$4$} & {\footnotesize{}$3$} & {\footnotesize{}$4.666666$} & {\footnotesize{}$2.285714$} & {\footnotesize{}$2.25$} & {\footnotesize{}$2.222222$}\tabularnewline
\hline 
{\footnotesize{}$\frac{\langle A,H^{k}\rangle_{F}}{\langle H^{k},H^{k}\rangle_{F}}$} & {\footnotesize{}$0$} & {\footnotesize{}$4$} & {\footnotesize{}$-8$} & {\footnotesize{}$2$} & {\footnotesize{}$-3.33333$} & {\footnotesize{}$-3.71429$} & {\footnotesize{}$1.875$} & {\footnotesize{}$-8.11111$} & {\footnotesize{}$5.5$}\tabularnewline
\hline 
\end{tabular}{\footnotesize{}. }{\footnotesize\par}

\medskip{}
\noindent Finally, we obtain the orthogonal projection of $A$ onto
${\cal A}_{23}$: 
\begin{equation}
A'=\left(\begin{array}{ccccc}
0 & -3.5 & -0.5 & 0 & 4\\
3.5 & 0 & -1 & 3.5 & -7.5\\
-0.5 & 1 & 0 & 5.5 & -7.5\\
0 & -3.5 & -5.5 & 0 & -8\\
-4 & 7.5 & 7.5 & 8 & 0
\end{array}\right).\label{eq:m-after-manipulation}
\end{equation}
The corresponding vector of weights is 
\[
w'=(0.2,-0.3,-0.3,-3.4,3.8)^{T}.
\]

Let us notice that the weights of the second and third alternatives
are actually equal. What's more, the weights of the rest of alternatives
have not changed. The common weight of the second and the third alternative
in $w'$ is the arithmetic mean of the corresponding weights in $w$.
\end{example}

It appears that the remark above is true regardless of the dimension
of the PCM and of the choice of the two alternatives whose weights
are equalized, i.e the following theorem is true:
\begin{thm}
Let $A=[a_{kl}]\in{\cal {A}}$, $i,j\in\{1,\ldots,n\}$, and $A'=[a'_{kl}]$
be the orthogonal projection of $A$ onto ${\cal A}_{ij}$. Then\\
 (1) For each $k\not\in\{i,j\}$ 
\begin{equation}
\sum_{l=1}^{n}a'_{kl}=\sum_{l=1}^{n}a_{kl},\label{eq_weights}
\end{equation}
(2) 
\begin{equation}
\sum_{l=1}^{n}a'_{il}=\sum_{l=1}^{n}a'_{jl}=\frac{\sum_{l=1}^{n}a_{il}+\sum_{l=1}^{n}a_{jl}}{2}.\label{am_weights}
\end{equation}
\end{thm}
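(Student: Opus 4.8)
The plan is to use the single defining property of an orthogonal projection, namely that the residual $D:=A-A'$ is orthogonal to all of $\mathcal{A}_{ij}$ while itself lying in $\mathcal{A}$ (being a difference of antisymmetric matrices). By Proposition~\ref{dim_aij} the tie space $\mathcal{A}_{ij}$ is a hyperplane in $\mathcal{A}$, i.e. has codimension one, so its orthogonal complement inside $\mathcal{A}$ is a single line. Consequently $D=\lambda N$ for some scalar $\lambda$ and a fixed matrix $N$ spanning $\mathcal{A}_{ij}^{\perp}$, and the whole statement collapses to a question about the row sums of this one matrix $N$.

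To identify $N$, observe that $\mathcal{A}_{ij}=\ker\ell$ for the linear functional $\ell(X)=\sum_{l=1}^{n}x_{il}-\sum_{l=1}^{n}x_{jl}$ on $\mathcal{A}$. By the Riesz representation theorem there is a unique $N\in\mathcal{A}$ with $\ell(X)=\langle X,N\rangle_{F}$ for every $X\in\mathcal{A}$, and this $N$ spans $\mathcal{A}_{ij}^{\perp}$. I would compute $N$ by first representing $\ell$ on the full space of matrices (the matrix whose $i$-th row is all ones and $j$-th row is all minus-ones) and then projecting onto the antisymmetric subspace via $X\mapsto\frac{1}{2}(X-X^{T})$. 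This yields $n_{ij}=1$, $n_{il}=\frac{1}{2}$ and $n_{jl}=-\frac{1}{2}$ for $l\notin\{i,j\}$, and $n_{kl}=0$ whenever $k,l\notin\{i,j\}$, together with the antisymmetric mirror entries. A short count then shows that the row sums of $N$ are $0$ for every $k\notin\{i,j\}$, while rows $i$ and $j$ sum to $\frac{n}{2}$ and $-\frac{n}{2}$ respectively.

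Granting this, part~(1) is immediate: for $k\notin\{i,j\}$ the $k$-th row of $D=\lambda N$ sums to $\lambda\cdot 0=0$, hence $\sum_{l}a'_{kl}=\sum_{l}a_{kl}$. For part~(2) I would avoid computing $\lambda$ explicitly. Every antisymmetric matrix has all of its entries summing to zero, so $\sum_{k}\sum_{l}d_{kl}=0$; combined with part~(1) this forces $\sum_{l}d_{il}+\sum_{l}d_{jl}=0$. On the other hand $A'\in\mathcal{A}_{ij}$ gives $\sum_{l}a'_{il}=\sum_{l}a'_{jl}$, i.e. $\sum_{l}a_{il}-\sum_{l}d_{il}=\sum_{l}a_{jl}-\sum_{l}d_{jl}$. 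Solving these two relations yields $\sum_{l}d_{il}=\frac{1}{2}\bigl(\sum_{l}a_{il}-\sum_{l}a_{jl}\bigr)$, and substituting back gives $\sum_{l}a'_{il}=\sum_{l}a'_{jl}=\frac{1}{2}\bigl(\sum_{l}a_{il}+\sum_{l}a_{jl}\bigr)$, as claimed.

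The only delicate step is the identification of $N$ in the second paragraph: the representer of $\ell$ must be taken \emph{inside} the antisymmetric space $\mathcal{A}$, which is what produces the factors $\frac{1}{2}$ on the off-diagonal entries and makes the $\{i,j\}$-complementary block of $N$ vanish; getting these entries wrong would spoil the row-sum count. Everything after that is elementary bookkeeping. As an alternative that sidesteps $N$ entirely, one can prove part~(1) directly: testing $\langle D,\cdot\rangle_{F}=0$ against each $C^{qr}$ with $q,r\notin\{i,j\}$ shows $d_{qr}=0$ off the $\{i,j\}$ rows and columns, and testing against the explicit matrix $C\in\mathcal{A}_{ij}$ with $c_{ki}=c_{kj}=\frac{1}{2}$ (and antisymmetric mirror) gives $d_{ki}+d_{kj}=0$; part~(2) then follows from the same zero-total-sum argument.
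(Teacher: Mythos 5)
Your proof is correct, but it takes a genuinely different route from the paper's. The paper works entry-by-entry with the explicit basis $\{B^{s}\}$ of ${\cal A}_{ij}$ constructed earlier (the matrices $C^{qr}$, $D^{p}$, $E^{p}$, $F^{p}$, $G^{p}$): it expands each orthogonality condition $\langle A'-A,B^{s}\rangle_{F}=0$ into a linear relation among entries of $A'-A$, and then proves (\ref{eq_weights}) by a case analysis ($k<i$, $i<k<j$, $k>j$) and (\ref{am_weights}) by summing further pairs of these relations before invoking $A'\in{\cal A}_{ij}$. You instead use the codimension-one structure: by Proposition \ref{dim_aij} the tie space is a hyperplane in ${\cal A}$, so the residual $D=A-A'$ lies on the line spanned by the Riesz representer $N$ of the functional $\ell(X)=\sum_{l}x_{il}-\sum_{l}x_{jl}$ taken \emph{inside} ${\cal A}$. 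Your computation of $N$ (antisymmetrizing the $\pm1$-rows matrix, which is exactly where the $\frac{1}{2}$'s and the vanishing $\{i,j\}$-complementary block come from) and of its row sums ($0$, $\frac{n}{2}$, $-\frac{n}{2}$) is correct, and your derivation of part (2) from part (1) plus the zero total sum of any antisymmetric matrix plus $A'\in{\cal A}_{ij}$ is also correct --- in fact leaner than the paper's, which re-derives the analogous identity $T=0$ from the orthogonality equations rather than from antisymmetry. What your approach buys: it is basis-free, much shorter, and yields as a by-product the closed form $A'=A-\frac{\ell(A)}{\langle N,N\rangle_{F}}N$, which makes transparent why the optimal correction is confined to rows and columns $i$ and $j$ and spread equally over them; it would also shortcut the Gram--Schmidt machinery used elsewhere in the paper to compute $A'$. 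What the paper's approach buys: it reuses and stress-tests the basis that the paper constructs and orthogonalizes anyway, and it never needs the observation that ${\cal A}_{ij}$ has codimension one.
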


\begin{proof}
Let us assume, without the loss of generality, that $i<j$.

Note that $(A'-A)\ \bot\ {\cal A}_{ij}$, which implies that $(A'-A)\ \bot\ B^{s}$
for $s=1\ldots,\frac{(n-2)(n-3)}{2}$, where $\{B^{s}\}$ is a base
of ${\cal A}_{ij}$ defined in (\ref{C})-(\ref{G}). Thus, for each
$B^{p}$ we can write the equality 
\[
\langle A'-A,B^{s}\rangle=0,
\]
which is equivalent to:\\
 {${\bf (1_{qr})}$} $a'_{qr}-a_{qr}=0,$ for $(q,r)\in Z_{ij}$,
$1\leq s\leq\frac{(n-2)(n-3)}{2}$;\\
 {${\bf (2_{p})}$} $a'_{pi}-a_{pi}-a'_{jn}+a_{jn}=0,$ for $p<i$,
$\frac{(n-2)(n-3)}{2}+1\leq s\leq\frac{(n-2)(n-3)}{2}+i-1$;\\
 {${\bf (3_{p})}$} $a'_{pj}-a_{pj}+a'_{jn}-a_{jn}=0,$ for $i\neq p<j$,
$\frac{(n-2)(n-3)}{2}+i\leq s\leq\frac{(n-2)(n-3)}{2}+\linebreak+i+j-2$
and $s\neq\frac{(n-2)(n-3)}{2}+2i-1$;\\
 {${\bf (3_{i})}$} $a'_{ij}-a_{ij}+2a'_{jn}-2a_{jn}=0,$ for $s=\frac{(n-2)(n-3)}{2}+2i-1$;\\
 {${\bf (4_{p})}$} $a'_{ip}-a_{ip}+a'_{jn}-a_{jn}=0,$ for $i<p\neq j$,
$\frac{(n-2)(n-3)}{2}+i+j-1\leq s\leq\linebreak\leq\frac{(n-2)(n-3)}{2}+n+j-2$;\\
 {${\bf (5_{p})}$} $a'_{jp}-a_{jp}-a'_{jn}+a_{jn}=0,$ for $p>j$,
$\frac{(n-2)(n-3)}{2}+n+j-1\leq s\leq\frac{n^{2}-n}{2}-1$.\\

Now, for the proof of (\ref{eq_weights}) assume that $k\not\in\{i,j\}$.
Then

$\begin{array}{lll}
S & := & {\displaystyle \sum_{l=1}^{n}a'_{kl}-\sum_{l=1}^{n}a_{kl}=\sum_{l=1}^{n}(a'_{kl}-a_{kl})=}\\
 & = & {\displaystyle \sum_{l<i}(a'_{kl}-a_{kl})+a'_{ki}-a_{ki}+\sum_{i<l<j}(a'_{kl}-a_{kl})+a'_{kj}-a_{kj}+}\\
 & + & {\displaystyle \sum_{l>j}(a'_{kl}-a_{kl})}.
\end{array}$.

From {${\bf (1_{kl})}$} for $l\not\in\{i,j\}$ we get 
\[
{\displaystyle \sum_{l<i}(a'_{kl}-a_{kl})+\sum_{i<l<j}(a'_{kl}-a_{kl})+\sum_{l>j}(a'_{kl}-a_{kl})=0},
\]
so 
\[
S=a'_{ki}-a_{ki}+a'_{kj}-a_{kj}.
\]

Consider three cases:\\
 (a) If $k<i$, we add equations {${\bf (2_{k})}$} and {${\bf (3_{k})}$}
and we get $S=0$.\\
 (b) If $i<k<j$, we subtract equation {${\bf (4_{k})}$} from {${\bf (3_{k})}$}
and we get $S=0$.\\
 (c) If $k>j$, we add equations {${\bf (4_{k})}$} and {${\bf (5_{k})}$}
and we get $S=0$.\\
 This concludes the proof of (\ref{eq_weights}).

\noindent Now, let us calculate

$\begin{array}{lll}
T & := & {\displaystyle \sum_{l=1}^{n}a'_{il}+\sum_{l=1}^{n}a'_{jl}-\sum_{l=1}^{n}a_{il}-\sum_{l=1}^{n}a_{jl}=}\\
 & = & {\displaystyle \sum_{l<i}(a'_{il}-a_{il})+\sum_{l<i}(a'_{jl}-a_{jl})}+a'_{ii}-a_{ii}+a'_{ji}-a_{ji}+\\
 & + & {\displaystyle \sum_{i<l<j}(a'_{il}-a_{il})+\sum_{i<l<j}(a'_{jl}-a_{jl})}+a'_{ij}-a_{ij}+a'_{jj}-a_{jj}+\\
 & + & {\displaystyle \sum_{l>j}(a'_{il}-a_{il})+\sum_{l>j}(a'_{jl}-a_{jl}).}
\end{array}$.

\noindent Since 
\[
a'_{ii}=a_{ii}=a'_{jj}=a_{jj}=a'_{ij}+a'_{ji}=a_{ij}+a_{ji}=0,
\]
it follows that

$\begin{array}{lll}
T & = & {\displaystyle \sum_{l<i}(a'_{il}-a_{il})+\sum_{l<i}(a'_{jl}-a_{jl})+\sum_{i<l<j}(a'_{il}-a_{il})+\sum_{i<l<j}(a'_{jl}-a_{jl})}+\\
 & + & {\displaystyle \sum_{l>j}(a'_{il}-a_{il})+\sum_{l>j}(a'_{jl}-a_{jl}).}
\end{array}$.

From {${\bf (2_{l})}$} and {${\bf (3_{l})}$} we get 
\begin{equation}
\sum_{l<i}(a'_{il}-a_{il})+\sum_{l<i}(a'_{jl}-a_{jl})=\sum_{l<i}(-a'_{jn}+a_{jn})+\sum_{l<i}(a'_{jn}-a_{jn})=0.\label{eq14}
\end{equation}

From {${\bf (4_{l})}$} and {${\bf (3_{l})}$} we get 
\begin{equation}
\sum_{i<l<j}(a'_{il}-a_{il})+\sum_{i<l<j}(a'_{jl}-a_{jl})=\sum_{i<l<j}(-a'_{jn}+a_{jn})+\sum_{i<l<j}(a'_{jn}-a_{jn})=0.\label{eq15}
\end{equation}

From {${\bf (4_{l})}$} and {${\bf (5_{l})}$} we get 
\begin{equation}
\sum_{l>j}(a'_{il}-a_{il})+\sum_{l>j}(a'_{jl}-a_{jl})=\sum_{l>j}(-a'_{jn}+a_{jn})+\sum_{l>j}(a'_{jn}-a_{jn})=0.\label{eq16}
\end{equation}

Equations (\ref{eq14}), (\ref{eq15}) and (\ref{eq16}) imply that
\[
T=0.
\]

On the other hand, $A'\in{\cal A}_{ij}$, which means that 
\[
\sum_{l=1}^{n}a'_{il}=\sum_{l=1}^{n}a'_{jl},
\]
so 
\[
T=2\sum_{l=1}^{n}a'_{il}-\sum_{l=1}^{n}a_{il}-\sum_{l=1}^{n}a_{jl}=2\sum_{l=1}^{n}a'_{jl}-\sum_{l=1}^{n}a_{il}-\sum_{l=1}^{n}a_{jl}=0,
\]
which proves (\ref{am_weights}). 
\end{proof}

\subsection{Measuring the ease of manipulation}

Manipulation carried out by an expert carries the risk of detection.
The consequences for the expert may vary, from loss of trust of the
person ordering the ranking to criminal sanctions. Of course, the
smaller the difference between the correct matrix and the manipulated
one, the lower the chances of dishonest answers being detected, and
thus the easier the manipulation. Therefore, the distance between
the individual elements of the matrix can be considered as the indicator
of the ease of manipulation.

It can be seen that both matrices differ in the rows and columns corresponding
to the replaced alternatives. For example, the absolute difference
of matrices $A$ (\ref{eq:m-before-manipulation}) and $A'$ (\ref{eq:m-after-manipulation})
given as $A''=|A-A'|=[|a_{ij}-a_{ij}^{'}|]$ is 
\[
A''=\left(\begin{array}{ccccc}
0 & 1.5 & 2.5 & 0 & 0\\
1.5 & 0 & 3 & 1.5 & 1.5\\
1.5 & 3 & 0 & 1.5 & 1.5\\
0 & 1.5 & 1.5 & 0 & 0\\
0 & 1.5 & 1.5 & 0 & 0
\end{array}\right).
\]
Thus, $A''$ has $14$ elements different than $0$. In general, such
a matrix can have $4n-6$ non-zero elements. Therefore, the Ease of
Manipulation Index (EMI) takes the form of the average difference
between elements of the original matrix and the manipulated one, i.e.

\[
\textit{EMI}(A,A')=\frac{1}{4n-6}\sum_{i,j=1}^{n}\left|a_{ij}-a_{ij}^{'}\right|.
\]
In the case of $A$ (\ref{eq:m-before-manipulation}) and $A'$ (\ref{eq:m-after-manipulation}),
the value of $\textit{EMI}(A,A')=1.785$.

\section{Conclusion and summary\label{sec:Conclusions}}

In the presented work, we introduce a method to find a closest approximation
of a PCM which equates two given alternatives (let's say, the $i$-th
and the $j$-th one). We also prove that the weights of all of the
other alternatives do not change, while the new weights of the equated
alternatives are equal to the arithmetic mean of the original ones.

Example \ref{ex1} shows that it is impossible to find the best approximation
of a PCM such that the positions of the $i$-th and the $j$-th alternatives
in a ranking are reversed. However, if two alternatives have the same
ranking, we may slightly change the element $a_{ij}$ in order to
tip the scales of victory in favor of one of them. The resulting matrix
will satisfy the manipulation condition.

We also proposed an Ease of Manipulation Index (EMI), which allows
to determine the average difficulty of switching the positions of
two alternatives. The analysis of $A''$ resulting from the difference
between the original and manipulated PCMs can also indicate pairwise
comparisons where manipulation is particularly difficult (the distance
between $a_{ij}$ and its counterpart $a_{ij}^{'}$ is exceptionally
large).

An alternative gradient method presented in \citep{Magnot2023agmf}
is a possible subject of the future research. Another possible generalization
could be incomplete PCMs considered for example in \citep{Kulakowski2019tqoi}.

\section*{Acknowledgments}

The research has been supported by The National Science Centre, Poland,
project no. 2021/41/B/HS4/03475 and by the AGH University of Science
and Technology (task no. 11.11.420.004).

\bibliographystyle{plain}
\bibliography{papers_biblio_reviewed}

\end{document}